\documentclass[11pt]{article}

\usepackage[preprint]{acl}

\usepackage{times}
\usepackage{latexsym}

\usepackage[T1]{fontenc}

\usepackage[utf8]{inputenc}
\usepackage{amsfonts, amsmath, amssymb, amsthm}
\usepackage{centernot}

\newtheorem{lemma}{Lemma}
\theoremstyle{definition} 

\usepackage{microtype}
\usepackage{algorithm}
\usepackage{algorithmic}
\usepackage{inconsolata}

\usepackage{graphicx}
\usepackage{subcaption} 
\usepackage{verbatim}
\usepackage{booktabs}
\title{Graph-Based Phonetic Error Correction of Noisy ASR}

\author{Pratik Rakesh Singh, Mohammadi Zaki, Aneesh Mukkamala \and Pankaj Wasnik \\
        Sony Research India\\
        \texttt{\{pratik.singh, mohammadi.zaki, aneesh.mukkamala, pankaj.wasnik\}@sony.com}}

\begin{document}
\maketitle
\begin{abstract}

Automatic speech recognition (ASR) systems, despite low overall word error rates, produce residual lexical errors that disproportionately affect semantically critical tokens such as named entities, negations, and sentiment-bearing words. These errors are often structured, arising from phonetic similarity rather than random noise, making naive token-level correction insufficient.
We propose a structured ASR correction framework, that we call G-SPIN,  that combines phonetic graph modeling with contextual language understanding. A graph neural network (GNN) first constructs acoustically plausible candidate neighborhoods for flagged tokens, explicitly restricting the correction search space to phonetic alternatives. A masked language model (MLM) then provides local contextual scoring, and an instruction-tuned large language model (LLM) performs final context-aware re-ranking over this compact candidate set. By decoupling structured phonetic reasoning from contextual semantic selection, our method avoids unconstrained generation while improving correction accuracy. The framework is lightweight, modular, and operates entirely at inference time.

\end{abstract}

\section{Introduction}

Automatic speech recognition (ASR) systems serve as a foundational interface between spoken language and text-based processing. Despite substantial progress and low aggregate word error rates, residual transcription errors remain inevitable in spontaneous, conversational, and acoustically challenging settings \cite{szymański2020werwerthink}. Importantly, these errors are not uniformly distributed across tokens, nor are they purely random \cite{leng2023softcorrecterrorcorrectionsoft}. Instead, ASR errors exhibit structured patterns, often governed by phonetic similarity, acoustic ambiguity, and contextual dependencies.

A central challenge in ASR correction lies in the nature of these perturbations. Many errors arise from systematic phoneme-level confusions—homophones, near-homophones, boundary shifts, or insertions and deletions driven by acoustic similarity \cite{wang2024dancer, nemoto-etal-2008-speech}. Consequently, effective correction cannot rely solely on generic contextual rewriting. Naïve token-level edits risk either failing to repair acoustically plausible confusions or introducing spurious replacements that are contextually fluent but phonetically implausible. \cite{Bouselmi2006MultilingualNS, chen-etal-2021-integrated}

Recent approaches have employed LLMs in prompt-driven settings to directly rewrite noisy transcripts. While flexible, such methods treat correction as unconstrained generation. As a result, they may hallucinate novel content, over-correct benign variations, or exhibit instability across similar contexts \cite{lyu2025dociaonlinedocumentlevelcontext, ma2025asrerrorcorrectionusing}. These behaviors reflect reliance on implicit in-context reasoning rather than explicit modeling of ASR noise structure. Recent work has also explored multi-stage ASR error correction using LLMs without explicit structural modeling. For example, the Reliable LLM Correction Framework (RLLM-CF) proposed by \cite{fang2025fewer} decomposes ASR correction into three LLM-driven stages—error pre-detection, iterative chain-of-thought subtask correction, and reasoning verification—to reduce hallucinations and improve correction accuracy. While this approach demonstrates promising reductions in character and word error rates, it remains fully dependent on prompt-based reasoning and iterative LLM calls for identification and rewriting, without explicit modeling of phonetic ambiguity or structured candidate generation. By contrast, our framework constrains the correction search space using a phonetic graph and integrates contextual scoring models, thereby reducing reliance on unconstrained LLM generation and improving correction consistency across contexts.

An alternative direction is to incorporate structured inductive bias into the correction process. Since many ASR errors arise from phonetic ambiguity, candidate corrections should be restricted to acoustically plausible alternatives before contextual reasoning is applied. This suggests a two-level correction paradigm: first constrain the lexical search space using phonetic structure, then apply contextual models to select the most coherent alternative.

In this work, we propose a structured ASR correction framework that explicitly separates phonetic reasoning from contextual selection. We construct a phoneme-level graph and train a graph neural network (GNN) to model relationships among acoustically similar lexical candidates. For each flagged token, the GNN defines a compact, structured candidate neighborhood. A masked language model (MLM) provides local contextual scoring, and an instruction-tuned LLM performs final context-aware re-ranking strictly within this constrained set. By reducing the correction search space prior to semantic reasoning, our approach avoids unconstrained rewriting while remaining flexible and context-sensitive.

Our framework is lightweight and modular, and primarily operates at inference time. While G-SPIN does not require any retraining or fine-tuning during deployment, it includes a one-time offline pretraining step for the GNN component, which captures phonetic relationships within a fixed vocabulary for a given language. This pretraining is data-independent and need not be repeated across datasets or domains. Once trained, the GNN is reused without modification, and the entire correction pipeline runs in an inference-only manner. In contrast to conventional approaches that require repeated fine-tuning for domain adaptation, our method avoids any task- or dataset-specific retraining.

Empirical results demonstrate that structured phonetic constraint combined with contextual re-ranking significantly improves correction accuracy over prompt-based rewriting and purely contextual baselines. These findings highlight the importance of integrating phonetic structure into modern ASR correction systems. Our contributions can be summarized as follows:

\begin{itemize}

\item 
We propose a principled ASR correction framework that combines phonetic structure and contextual understanding. A phoneme-level GNN constructs acoustically plausible candidate neighborhoods, and MLM scoring with instruction-tuned LLM re-ranking selects corrections strictly within this set, avoiding unconstrained rewriting and hallucinations.

\item 
Our method operates entirely at inference time without retraining ASR systems or LLMs, making it easily deployable and backbone-agnostic.

\item 
Extensive experiments on English, Telugu, Spanish, and Hindi demonstrate consistent improvements over strong LLM-based baselines for ASR correction across diverse linguistic settings.

\end{itemize}



\section{Problem Setup and Motivation}





\paragraph{Problem Setup.}

Let $x \in \mathcal{X}$ denote the intended (clean) sentence and 
$\tilde{x} = x + \delta$ its noisy ASR realization, where $\delta$ represents lexical perturbations arising from phonetic confusions, along with insertions, deletions, and boundary shifts. In this work, we focus primarily on correcting \emph{phonetic confusions}, which constitute a dominant class of ASR errors, by leveraging phonetic similarity structure. Concretely, we assume that erroneous tokens in $\tilde{x}$ can be mapped to a set of acoustically plausible alternatives defined over a phonetic neighborhood.

The ASR correction task seeks to recover a corrected sentence $\hat{x}$ from $\tilde{x}$ such that $\hat{x}$ is both (i) phonetically plausible under the acoustic evidence that produced $\tilde{x}$—operationalized via constraints over a phonetic similarity graph—and (ii) contextually coherent at the sentence level. Formally, we aim to design a correction operator
\(
C : \mathcal{X} \rightarrow \mathcal{X}
\)
that maps noisy transcripts to refined textual realizations:
\(
\hat{x} = C(\tilde{x}),
\)
with the objective that $\hat{x}$ closely approximates the latent clean sentence $x$.

\paragraph{Motivation.}
Automatic speech recognition (ASR) systems have achieved impressive reductions in aggregate word error rates. However, residual transcription errors remain inevitable in spontaneous, conversational, and acoustically challenging environments. Importantly, these errors are not uniformly distributed across tokens. Perturbations affecting semantically salient words—such as named entities, negations, or sentiment-bearing terms—can substantially alter the meaning of a sentence, even when overall error rates appear low.

Let $x$ denote the intended source sentence and $\tilde{x} = x + \delta$ its ASR realization. The perturbation $\delta$ is rarely arbitrary noise. Instead, ASR errors typically arise from structured phonetic confusions, including substitutions among acoustically similar words, insertions, deletions, or boundary shifts. As a result, many erroneous tokens are locally plausible yet globally inconsistent with sentence-level context.

Naive correction strategies that rely purely on contextual rewriting treat ASR repair as unconstrained generation. While such approaches may improve fluency, they risk hallucinating new content, over-correcting benign variations, or introducing inconsistencies across similar contexts. Conversely, purely string-based or edit-distance heuristics fail to capture long-range semantic dependencies and may under-correct impactful errors.

Effective ASR correction therefore requires integrating two complementary inductive biases: (i) phonetic structure, to restrict candidate replacements to acoustically plausible alternatives, and (ii) contextual language understanding, to select the most coherent option within that constrained set. The challenge is to design a lightweight inference-time mechanism that combines these signals without relying on unconstrained rewriting or expensive retraining.

Importantly, the need to restrict decoding to a structured phonetic space is not only empirically motivated but also theoretically grounded. As shown in Appendix~\ref{subsec:theoretical motivation} (Lemma~\ref{lemma:local_stability}), phonetic space restriction induces a contraction in the input perturbation norm, thereby improving the local stability of the frozen LLM under ASR noise. This formal result provides a principled justification for structured correction prior to downstream generation.

These considerations motivate a structured correction framework that explicitly models phonetic neighborhoods while leveraging contextual scoring for final selection, thereby balancing acoustic plausibility with semantic coherence.


\section{Methodology}
Here we discuss our Methodology Graph-based Structured Phonetic INference $(G-SPIN)$  the ASR input correction as a three-step process: 1) Faulty Input Identification, 2) Correct Input retrieval, 3) Input Correction. This three-step process ensures identifying noisy ASR input and replacing it with the best possible alternatives fitting the context. 

\paragraph{Step1: Faulty Input Identification: }
Automatic Speech Recognition (ASR) systems often produce noisy outputs, where acoustically confusable or out-of-vocabulary words are substituted with incorrect lexical forms. To identify such erroneous words without access to reference transcripts, we propose a Contextual Anomaly Detection (CAD) method based on a masked language model (MLM). 

Given a sentence $S = (w_1, \dots, w_n)$ embedded within its local discourse context $C$, we approximate the pseudo-log-likelihood (PLL) of each word by masking its constituent tokens and measuring the model’s confidence in reconstructing them. Let a word $w_i$ consist of tokens $\{t_{i1}, \dots, t_{im}\}$. For each token $t_{ij}$, we compute:\(P(t_{ij} \mid S_{\setminus t_{ij}}, C)\), where $S_{\setminus t_{ij}}$ denotes the input sequence with token $t_{ij}$ replaced by the \texttt{[MASK]} symbol. The word-level log-probability is then defined as the average token log-probability: \(
\log P(w_i \mid C) = \frac{1}{m} \sum_{j=1}^{m} \log P(t_{ij} \mid S_{\setminus t_{ij}}, C)\) and the corresponding word probability is:
\(
P(w_i \mid C) = \exp\left( \log P(w_i \mid C) \right).
\)

Additionally, we compute a minimum token confidence score:
\(
P_{\min}(w_i) = \min_j P(t_{ij} \mid S_{\setminus t_{ij}}, C),
\) which captures the weakest reconstruction confidence among the word’s sub-word tokens. To detect anomalous words, we first filter candidates whose word probability falls below a predefined threshold $\tau$:
\begin{equation}
P(w_i \mid C) < \tau.
\end{equation}
For the remaining candidates, we compute a combined anomaly score:
\[
\mathcal{A}(w_i) = \log P(w_i \mid C) 
+ \alpha \log \left( P_{\min}(w_i) \right),
\]
where $\alpha$ controls the contribution of the minimum token confidence. Words are ranked by $\mathcal{A}(w_i)$ in ascending order, and the top-$f$ (5 in our case) lowest-scoring words are flagged as potential ASR errors our experiment we use $\alpha = 0.5, \tau = 10^{-2}$ which are determined through experimentally.

This formulation enables context-aware detection of improbable lexical realizations without requiring supervision, making it suitable for large-scale post-processing of ASR outputs in realistic conversational settings.


\paragraph{Step2: Correct input retrieval.}
Previous studies observe that automatic speech recognition (ASR) frequently substitutes phonetically similar yet semantically unrelated tokens \cite{ruan2020asr,huang2020learningasrrobustcontextualizedembeddings}. To exploit this structure we convert the language model vocabulary into a phoneme-level graphical space and retrieve candidate correct inputs via graph-based similarity. Formally, let $\mathcal{V}$ denote the (LLM) vocabulary and let $\phi:\mathcal{V}\rightarrow \mathcal{P}^+$ be a phonemizer that maps each vocabulary item to a sequence of phonemes. We create a phoneme-node set
\[
\mathcal{N} = \big\{p \mid p \in \phi(v),\ v\in\mathcal{V}\big\},
\]
and obtain fixed-length phoneme embeddings via an embedding function $e:\mathcal{P}^+\rightarrow\mathbb{R}^d$ (e.g., averaging subtoken/phoneme encoder outputs). We then build an undirected graph $G=(\mathcal{N},\mathcal{E})$ where
\[
(p_i,p_j)\in\mathcal{E} \quad \text{iff} \quad \mathrm{cosine}\big(e(p_i),e(p_j)\big) \ge \eta,
\]
for a similarity threshold $\eta$ (0.9 in our case). Node attributes may include the phoneme embedding $e(p)$, enabling downstream disambiguation from phonetic signals.


\paragraph{GNN training.}
We train a GNN ($f_\theta$) for a link-prediction objective that encourages nodes representing confusable phonemes (and thus confusable words) to be strongly associated while separating spurious pairs. Let $h_v = f_\theta(v)$ denote the learned node representation. Define a differentiable pairwise score
\(
s(u,v) = h_u^\top h_v
\)
(or a small feed-forward scorer $s(u,v)=\mathrm{MLP}([h_u;h_v;h_u\odot h_v])$). For each positive pair $(u,v)\in\mathcal{P}^+$ we sample a set of negative nodes $\mathcal{N}_u^-$ (negative sampling) and minimize the binary cross-entropy style objective
\begin{align*}
\begin{split}
\mathcal{L}(\theta) =& -\sum_{(u,v)\in\mathcal{P}^+}\log\sigma\big(s(u,v)\big)\\
& - \lambda \sum_{(u,n)\in\mathcal{N}^-}\log\big(1-\sigma\big(s(u,n)\big)\big),     
\end{split}   
\end{align*}
where $\sigma$ is the sigmoid function and $\lambda$ balances positive/negative terms. In practice we construct positives using multi-hop connectivity: nodes within $\le H$ hops are treated as positive pairs (to capture one-to-many phonetic neighborhoods), while negatives are sampled uniformly or by degree-aware sampling to avoid easy negatives.

\paragraph{Inference and correction.}
At inference time, for a faulty ASR token $w_{\mathrm{ASR}}$, we first compute its phoneme representation $p_{\mathrm{ASR}} = \phi(w_{\mathrm{ASR}})$. We then locate the corresponding node $u$ in the graph and compute link scores $s(u,v)$ to candidate nodes $v$. The top-$k$ phoneme candidates are finally mapped back to vocabulary items in $\mathcal{V}$ using an inverted lexicon to produce correction candidates.

\paragraph{Step 3: Beam-search decoding with MLM scoring.}
Given a flagged ASR token $w_{\text{ASR}}$, the GNN provides a set of candidate phoneme-aligned token segments $\mathcal{S} = \{\mathcal{S}_1, \dots, \mathcal{S}_K\}$, where each segment $\mathcal{S}_i$ contains multiple candidate subword tokens. Our objective is to reconstruct the most plausible lexical candidate by composing one token from each segment. Since the search space grows exponentially with the number of segments, we employ beam search to efficiently explore the candidate space.

At decoding step $t$, each beam represents a partial candidate composed of subword pieces $\mathbf{p}_{1:t} = (p_1,\dots,p_t)$, which are detokenized into a surface form $c_t$. Each candidate is scored using a weighted combination of contextual language model likelihood.
\begin{align*}
    \begin{split}
        &\mathrm{Score}(c_t)
=
\lambda_{\mathrm{lm}} \cdot \log P_{\mathrm{MLM}}(c_t \mid S)
\\&+
\lambda_{\mathrm{edit}} \cdot \big(-\mathrm{NED}(c_t, w_{\text{ASR}})\big)
+
\lambda_{\mathrm{freq}} \cdot \mathrm{Freq}(c_t),
    \end{split}
\end{align*}
where $\mathrm{NED}(\cdot,\cdot)$ denotes normalized edit distance, and $P_{\mathrm{MLM}}(c_t \mid S)$ is computed using a masked language model (XLM-RoBERTa) by masking the original token position and evaluating the log-probability of the candidate’s subword tokens. We set $\lambda_{\mathrm{lm}} = 1, \lambda_{\mathrm{edit}} = 2, \lambda_{\mathrm{freq}} = 0.5$.

Beam search maintains the top-$B$ candidates (where $B$ = 10 in our experiment) at each step:
\[
\mathcal{B}_{t}
=
\operatorname{TopB}
\left(
\bigcup_{\mathbf{p}\in\mathcal{B}_{t-1},\ s\in\mathcal{S}_t}
\{\mathbf{p}\oplus s\}
\right),
\]
where $\oplus$ denotes concatenation. After processing all segments, the final decoded candidate is selected as:
\(
\hat{w}
=
\arg\max_{c \in \mathcal{B}_K}
\mathrm{Score}(c).
\)
This beam-based formulation enables efficient exploration of the large combinatorial candidate space while jointly incorporating phonetic similarity, contextual plausibility, and lexical priors to recover the most likely correction.

\begin{figure*}
    \centering
    \includegraphics[width=0.9\linewidth]{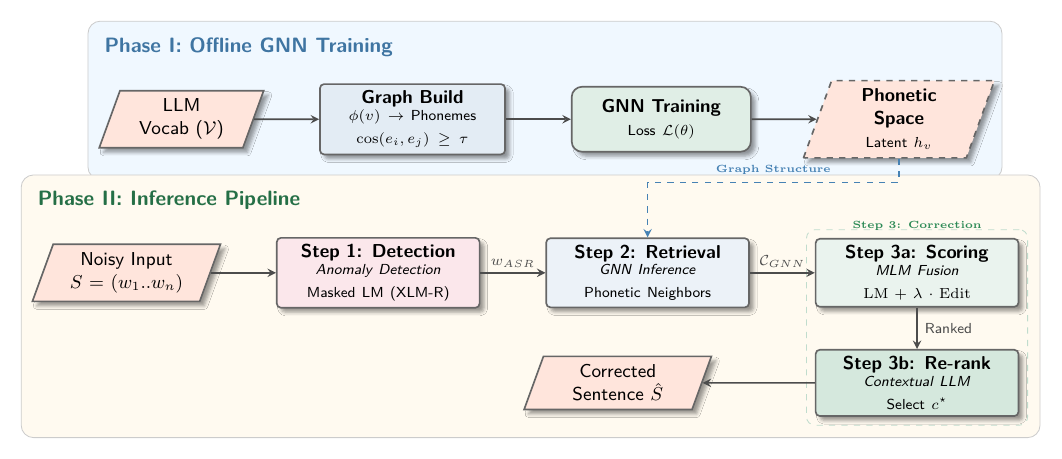}
    \caption{Block Diagram of the proposed Graph-Based ASR Correction strategy. Phase I represents the offline training phase of the GNN, while Phase II represents the inference pipeline.}
    \label{fig:block-diagram}
\end{figure*}




\paragraph{LLM-based contextual candidate selection.}
Although beam search with masked language model (MLM) scoring produces a set of plausible candidate corrections, the final selection requires deeper semantic and contextual reasoning beyond token-level likelihoods. To address this, we employ an instruction-tuned LLM (LLM), as a context-aware selection module. Given the noisy ASR sentence $S$, the flagged word $w_{\text{ASR}}$, and the decoded candidate set $\mathcal{C} = \{c_1,\dots,c_K\}$ from beam search, the LLM is prompted to select the single best replacement that fits the sentence context. The full prompt is shown in Appendix~\ref{subsec:prompt for llm decoding}.
Formally, the LLM defines a conditional selection function:
\[
\hat{c}
=
g_{\mathrm{LLM}}\left(S, w_{\text{ASR}}, \mathcal{C}\right),
\]
where $g_{\mathrm{LLM}}$ maps the noisy sentence, faulty token, and candidate set to the most contextually appropriate correction.


This LLM-based selection stage serves as a semantic correction layer on top of the phonetic graph and MLM-based decoding, enabling robust recovery of contextually appropriate lexical forms from noisy ASR outputs. An complete overview of our pipeline can be inferred through Figure~\ref{fig:block-diagram}.

\section{Experiments}
\subsection{Experimental Setup} We evaluate our method and baselines on Gemma-3-4b-it \cite{gemmateam2025gemma3technicalreport}, a strong and budget-friendly multilingual model. For faulty words detection (CAD) in Step 1 and Step 3 MLM-based decoding, we use XLM-RoBERTa for the MLM process \cite{conneau2020unsupervisedcrosslingualrepresentationlearning}. For training of GNN, we used GraphSAGE \cite{graphsage}, for link prediction, we use MLP layers connecting to the final layer of GNN. Below is the architectural setting for GNN in Appendix \ref{tab:gnn_settings}.

\subsection{Baselines} We compare our approach, G-SPIN, against several strong baselines for ASR error correction. First, we evaluate against DoCIA \cite{lyu2025dociaonlinedocumentlevelcontext}, a contextual LLM-based ASR refinement framework that leverages document-level context to improve transcription consistency and correctness. Second, we include RLLM-CF \cite{fang2025fewerhallucinationsverificationthreestage}, a three-stage LLM-based correction pipeline designed to reduce hallucinations and improve factual and contextual reliability in ASR refinement. In addition to LLM-based baselines, we also evaluate a graph-based baseline using the seed knowledge graph (KG) directly, without GNN retrieval. The baseline is a simple ASR noisy output. We evaluate G-SPIN and baselines on 3 metrics: WER (word error rate), SeMaScore $(S.)$ \cite{Sasindran2024SeMaScoreA}, and BertScore $(B.)$ \cite{zhang2020bertscoreevaluatingtextgeneration}. 
Furthermore, we evaluate GNN training on $Hits@10$ and $Hits@20$ and the AUC score.

\subsection{Dataset and languages} For our experimental setup, we use Loquacious-Set \cite{parcollet2025loquaciousset25000hours}, which contains noisy and clean output, making it perfect for our use case. 
To create parallel noisy–clean pairs, we systematically injected pure environmental and ambient noise samples into the clean audio. The noise types and mixing process were designed to approximate realistic deployment conditions encountered in real-world ASR scenarios. To ensure that the injected noise meaningfully impacted ASR performance without excessively degrading the linguistic content we designed a validation protocol across multiple noise levels. Specifically, we constructed a validation set spanning varying signal-to-noise ratios (SNRs) and evaluated transcription quality using both Word Error Rate (WER) and SeMa-Score. In addition to these quantitative metrics, we performed manual inspection to verify: The noise induced plausible ASR errors rather than arbitrary corruption, and the underlying semantic content of the utterances remained largely recoverable. We transcribe and translate audio for clean and noisy, we use seamless-m4t-v2-large\cite{seamless2023}. The languages we compare models are \texttt{hi}, \texttt{en}, \texttt{es}, and \texttt{te}.

\subsection{Results: }
Table~\ref{tab:main_tab} shows that G-SPIN outperforms all baselines in terms of WER, demonstrating the effectiveness of our GNN-based candidate generation combined with LLM-based contextual selection. G-SPIN also achieves the best SeMA score $(S.)$, although the margin over competing methods is relatively small, indicating comparable semantic preservation across approaches. In contrast, BERTScore $(B.)$ shows minimal variation among methods, suggesting that it is less sensitive to fine-grained lexical corrections and provides limited discriminative insight for ASR error correction performance.


\begin{table*}[t]
\centering
\small
\resizebox{\textwidth}{!}{%
\begin{tabular}{l|ccc|ccc|ccc|ccc}
\hline
\textbf{Methods} 
& \multicolumn{3}{c|}{\textbf{en}} 
& \multicolumn{3}{c|}{\textbf{es}} 
& \multicolumn{3}{c|}{\textbf{te}} 
& \multicolumn{3}{c}{\textbf{hi}} \\

& \textbf{WER} & \textbf{B.} & \textbf{S.}
& \textbf{WER} & \textbf{B.} & \textbf{S.}
& \textbf{WER} & \textbf{B.} & \textbf{S.}
& \textbf{WER} & \textbf{B.} & \textbf{S.} \\
\hline

ASR Baseline 
& 0.60 & 0.88& 0.64& 0.47 & 0.89& 0.72& 0.57 & 0.91& 0.80& 0.47 & 0.89& 0.78\\

DoCIA 
& 0.54 & 0.88& 0.71& 0.45 & 0.90& 0.75& 0.54 & 0.92& 0.82& 0.43 & 0.91& 0.80\\

RLLM-CF 
& 0.39 & 0.91& 0.80& 0.36 & 0.91& 0.81& 0.44 & 0.92& 0.87& 0.34 & 0.92& 0.85\\

KG 
& 0.37 & 0.91& 0.82& 0.35 & 0.91& 0.81& 0.42 & 0.92& 0.87& 0.36 & 0.92& 0.84\\

G-SPIN (ours)
& \textbf{0.32} & 0.91& \textbf{0.84}& \textbf{0.34} & 0.91& \textbf{0.83}& \textbf{0.39} & \textbf{0.93}& \textbf{0.89}& \textbf{0.31} & 0.92& \textbf{0.86}\\
\hline
\end{tabular}
}
\caption{WER $\downarrow$, BERTScore (B.) $\uparrow$ and SeMaScore (S.) $\uparrow$ comparison across languages. All scores lie between [0,1]}
\label{tab:main_tab}
\end{table*}

\begin{figure}[t]
    \centering
    \includegraphics[width=.9\linewidth]{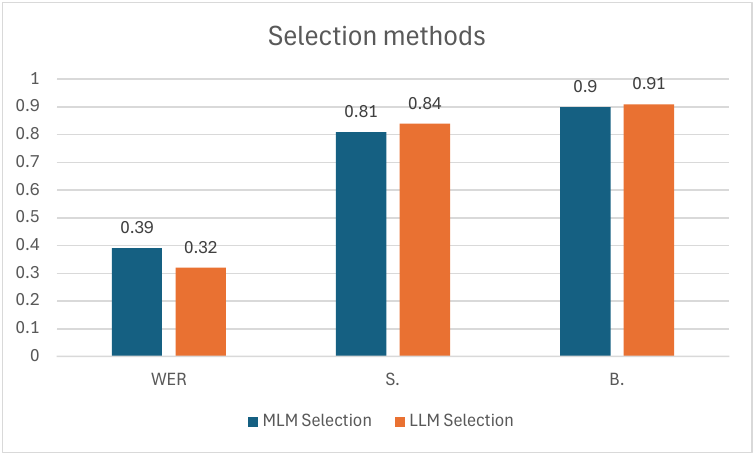}
    \caption{Comarison of different selection methods with the corresponding performance metrics.}
    \label{fig:scr}
\end{figure}

\begin{figure}[t]
    \centering
    \includegraphics[width=.9\linewidth]{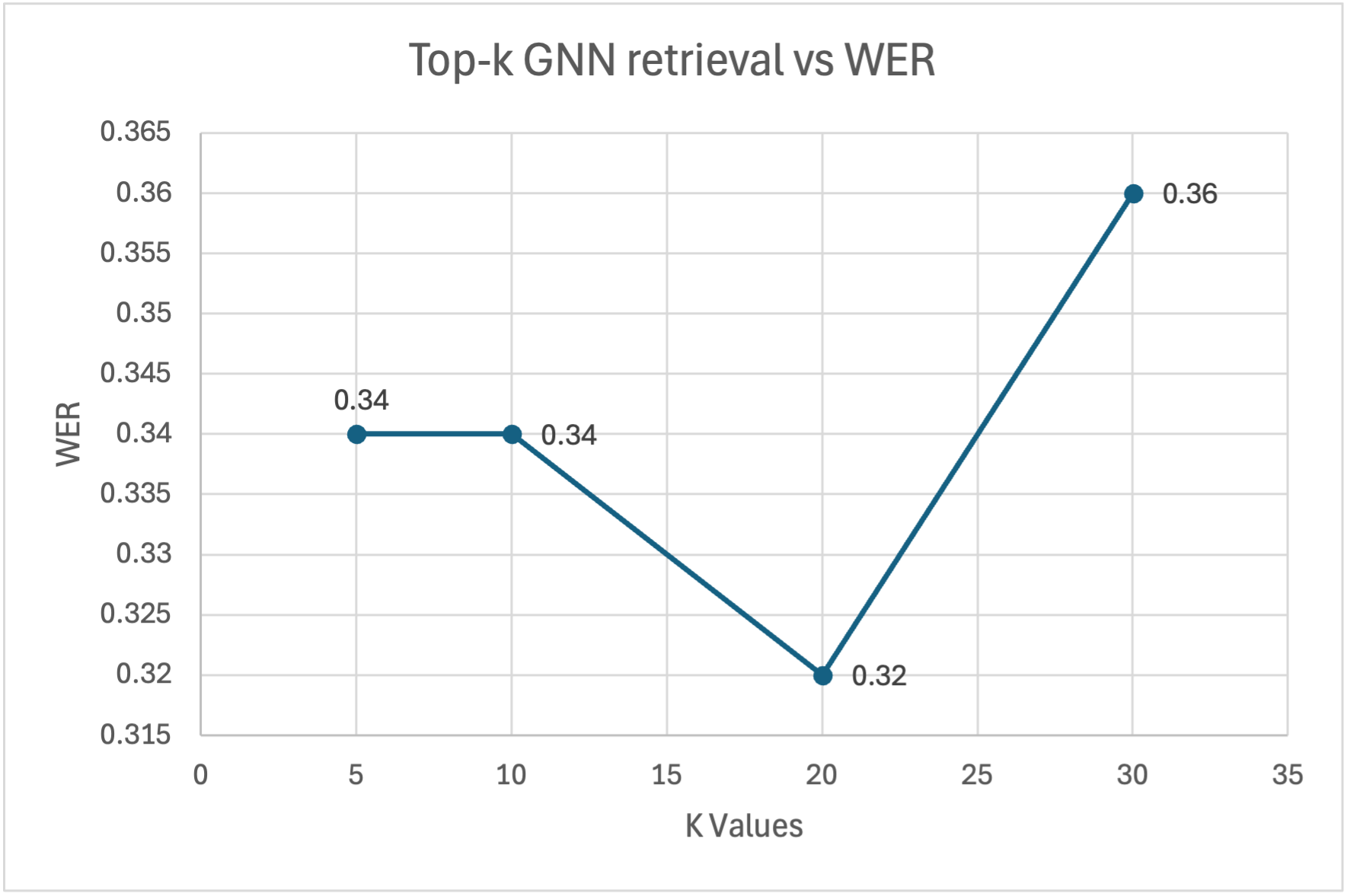}
    \caption{Ablation plot of value of $K$  vs WER.}
    \label{fig:rankwer}
\end{figure}

    

\subsection{Ablation Experiments}
\paragraph{Effect of Top-$K$ candidate selection.}
In Figure~\ref{fig:rankwer}, we analyze the impact of varying the number of Top-$K$ candidate tokens retrieved from the GNN during decoding. We observe that for smaller values such as $K=5$ and $K=10$, the WER remains largely unchanged, indicating that the highest-confidence candidates produced by the GNN are already highly relevant and sufficient for effective correction. Increasing the candidate pool to $K=20$ results in further improvement, suggesting that some optimal corrections may lie slightly beyond the top-ranked candidates and become accessible with a larger search space. However, when $K=30$, performance begins to degrade, likely due to the introduction of lower-quality or noisy candidates, which increase ambiguity during decoding and negatively affect candidate selection. 

\paragraph{MLM vs LLM based selection.} To evaluate contextual candidate selection, we compare our LLM-based selection against an MLM-based baseline, where the candidate with the highest masked language model probability is chosen. As shown in Figure~\ref{fig:scr}, the LLM-based method significantly outperforms MLM-based selection in terms of WER and achieves notable improvements in SeMA score, while both methods yield comparable BERTScore. These results indicate that LLM-based selection enables more effective context-aware disambiguation, leading to more accurate correction of ASR errors beyond token-level likelihood estimation.


\begin{table}[h]
\centering
\resizebox{\linewidth}{!}{%
\begin{tabular}{lccc}
\hline
\textbf{Languages} & \textbf{Hits@10} $\uparrow$ & \textbf{Hits@20} $\uparrow$ & \textbf{AUC} $\uparrow$ \\
\hline
en & 97.21 $\pm$ 0.65 & 99.02 $\pm$ 0.34 & 98.40 $\pm$ 0.30 \\
es & 98.71 $\pm$ 0.90 & 99.46 $\pm$ 0.68 & 98.90 $\pm$ 0.65 \\
hi & 95.57 $\pm$ 1.17 & 98.04 $\pm$ 0.93 & 96.66 $\pm$ 1.05 \\
te & 95.96 $\pm$ 1.00 & 99.39 $\pm$ 0.22 & 96.38 $\pm$ 0.48 \\
\hline
\end{tabular}
}
\caption{GNN training evaluation across languages. Higher is better for all metrics ($\uparrow$).}
\label{tab:gnn_train}
\end{table}

\paragraph{GNN link prediction performance.}
From Table~\ref{tab:gnn_train}, we observe that the GNN achieves strong performance across all languages on the link prediction task, as reflected by high Hits@10, Hits@20, and AUC scores. These results indicate that the model effectively captures phonetic similarity and reliably connects phonemes with acoustically and articulatorily similar counterparts. 
For Hindi and Telugu, the scores are slightly lower compared to other languages, which can be attributed to higher phonetic variability and increased ambiguity in phoneme–token mappings. This motivates the use of a larger candidate pool during retrieval. In particular, selecting Top-20 candidates from the GNN improves recall of correct phonetic matches, ensuring optimal correction candidates are included during downstream decoding and contextual re-ranking.

\subsection{Error Analysis and Qualitative Behavior}

We perform a detailed error analysis to better understand the strengths and limitations of our proposed method, G-SPIN, across different categories of ASR noise. For this analysis, we curate 200 samples for each error type from English audio and compute Word Error Rate (WER) across different noise categories (see Table~\ref{tab:error_analysis}).

\subsubsection{Substitution and Phonetically Similar Errors}

Our method is particularly effective at correcting substitution errors, especially those arising from phonetic ambiguity. This aligns with the design of G-SPIN, which explicitly models phonetic similarity during correction. As shown in Table~\ref{tab:error_analysis}, our approach significantly reduces errors in:
\begin{itemize}
    \item Grammatical errors: from 0.4037 $\rightarrow$ 0.1866
    \item Similar-sounding substitutions: from 0.4000 $\rightarrow$ 0.149
\end{itemize}

These improvements indicate that the proposed three-step pipeline—particularly the error detection and phonetic correction stages—is effective at identifying and resolving substitution-type errors. Qualitatively, we observe that G-SPIN is able to recover intended words even when the ASR output contains plausible but incorrect phonetic variants.

\subsubsection{Entity-Level Errors}

We also observe strong improvements in entity-related errors (e.g., named entities), with error rates reduced from 0.5107 $\rightarrow$ 0.192. This suggests that the model is capable of correcting semantically important tokens, which is critical for downstream real-world applications.

\subsubsection{Insertion Errors}

For insertion errors, G-SPIN achieves moderate improvements (0.4941 $\rightarrow$ 0.3656), but does not fully resolve them. This is expected, as the current framework is primarily designed around phonetic correction rather than sequence-level restructuring. However, we find that the faulty word detection stage (Step 1) is effective at identifying spurious inserted tokens that degrade sentence intelligibility. This suggests that, while G-SPIN alone does not completely eliminate insertion errors, it can serve as a strong precursor for downstream filtering or language model-based refinement.

\subsubsection{Deletion Errors}

Deletion errors remain the most challenging category, with only marginal improvement (0.73 $\rightarrow$ 0.71). This limitation stems from the inherent difficulty of recovering missing information from text-only ASR outputs without access to the underlying speech signal. To the best of our knowledge, accurately correcting deletion errors in such settings is fundamentally challenging, as the model must infer absent content without sufficient contextual or acoustic cues.







\begin{table}[t]
\centering
\resizebox{\linewidth}{!}{%
\small
\setlength{\tabcolsep}{4pt}
\renewcommand{\arraystretch}{0.95}
\begin{tabular}{lccccc}
\toprule
\textbf{Method} 
& \textbf{Gram.} 
& \textbf{Sim.} 
& \textbf{Ins.} 
& \textbf{Ent.} 
& \textbf{Del.} \\
\midrule
ASR 
& 0.404 & 0.400 & 0.494 & 0.511 & 0.73 \\
G-SPIN 
& \textbf{0.187} & \textbf{0.149} & \textbf{0.366} & \textbf{0.192} & \textbf{0.71} \\
\bottomrule
\end{tabular}}
\caption{Category-wise WER ($\downarrow$) on English ASR outputs (200 samples per category).}
\label{tab:error_analysis}
\end{table}

\section{Conclusion}

We introduce G-SPIN, a lightweight, inference-time framework that improves frozen LLMs under ASR noise. By combining phonetic constraints with graph-based contextual reasoning, it corrects transcription errors without retraining or unconstrained rewriting. Experiments on \texttt{en}, \texttt{es}, \texttt{hi}, and \texttt{te} show consistent gains, and theoretical analysis demonstrates that restricting the phonetic space improves local stability. G-SPIN offers a practical, plug-and-play approach to enhancing robustness in speech-driven applications.

\section*{Limitations}
One limitation of our work is that the effectiveness of G-SPIN depends on the quality of the phonetic neighborhoods constructed. If the candidate space fails to include the correct alternative, the correction module cannot recover the intended token. In highly noisy or domain-mismatched ASR settings, this may limit performance gains. G-SPIN’s performance degrades when the ASR output contains missing or dropped audio segments. In such cases, the method cannot recover the missing information, potentially resulting in incomplete or inaccurate corrections.

\section*{Ethics Statement}
Our work focuses on improving the robustness of multilingual ASR output correction to enhance accessibility and communication. However, the system relies on automatically generated transcriptions, which may contain errors or biases that could lead to misinterpretation, particularly in sensitive or minority-language contexts. Moreover, the datasets and underlying ASR systems may reflect demographic or linguistic biases, and while our method can correct transcription errors, it does not inherently remove these societal biases. Responsible deployment requires careful auditing for fairness across languages, accents, and dialects, as well as caution when using the system in critical applications.
\bibliography{acl_arxiv}

\appendix

\section{Theoretical Motivation}
\subsection{Local Stability under Structured Phonetic Projection}\label{subsec:theoretical motivation}

We provide a first-order analysis to understand why constraining corrections
to phonetic neighborhoods improves local stability.

\paragraph{Setup.}
Let $x \in \mathbb{R}^d$ denote the embedding representation of the clean sentence,
and let $\tilde{x} = x + \delta$ denote its ASR-corrupted version.
Let $f : \mathbb{R}^d \to \mathbb{R}$ be a continuously differentiable
contextual scoring function (e.g., masked likelihood, semantic consistency score).

Assume the correction operator $C$ produces:
\[
C(\tilde{x}) = x + \delta_C,
\]
where $\delta_C$ lies in a structured phonetic manifold
$\mathcal{M} \subseteq \mathbb{R}^d$,
representing acoustically plausible lexical directions.

We interpret $\delta_C$ as a projection of $\delta$ onto $\mathcal{M}$:
\[
\delta_C = P_{\mathcal{M}}(\delta).
\]

\begin{lemma}[Local Stability under Structured Projection]
\label{lemma:local_stability}
Let $f$ be continuously differentiable in a neighborhood of $x$.
Then for sufficiently small perturbations $\delta$,
\[
f(x+\delta) - f(x)
=
\nabla f(x)^\top \delta
+
\mathcal{O}(\|\delta\|^2).
\]

If the correction operator $C$ satisfies:
\[
\|\delta_C\| \le \alpha \|\delta\|,
\quad
\text{for some } \alpha < 1,
\]
then the first-order deviation obeys
\[
|f(x+\delta_C) - f(x)|
\le
\alpha \|\nabla f(x)\| \|\delta\|
+
\mathcal{O}(\|\delta\|^2).
\]
\end{lemma}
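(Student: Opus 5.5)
The plan has two stages: a first-order Taylor expansion of $f$ at $x$, followed by substituting $\delta_C$ for $\delta$ and applying Cauchy--Schwarz together with the contraction hypothesis $\|\delta_C\|\le\alpha\|\delta\|$.

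\emph{Step 1: the expansion.} Since $f$ is $C^1$ on a neighborhood $U$ of $x$, the mean value theorem along the segment from $x$ to $x+\delta$ (or the integral form $f(x+\delta)-f(x)=\int_0^1 \nabla f(x+t\delta)^\top\delta\,dt$) gives
\[
f(x+\delta)-f(x)=\nabla f(x)^\top\delta+r(\delta),
\]
with $r(\delta)=\int_0^1(\nabla f(x+t\delta)-\nabla f(x))^\top\delta\,dt$.

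Continuity of $\nabla f$ alone only yields $r(\delta)=o(\|\delta\|)$. To obtain the stated $\mathcal{O}(\|\delta\|^2)$, I would assume that $\nabla f$ is locally Lipschitz near $x$ with constant $L$; this holds, for instance, if $f$ is $C^2$. Then $|r(\delta)|\le \tfrac{L}{2}\|\delta\|^2$ for all $\delta$ in a ball of radius $\rho$ around $0$ with $x+B_\rho\subseteq U$.

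This regularity gap is the main obstacle. I would handle it by stating the Lipschitz-gradient assumption explicitly as the intended reading of ``first-order analysis.'' Alternatively, the conclusion can be weakened to $o(\|\delta\|)$, and the rest of the argument goes through unchanged.

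\emph{Step 2: apply the expansion to $\delta_C$.} Because $\|\delta_C\|\le\alpha\|\delta\|<\|\delta\|$, whenever $\|\delta\|<\rho$ the point $x+\delta_C$ also lies in the same ball. Step 1 therefore applies with $\delta_C$ in place of $\delta$:
\[
|f(x+\delta_C)-f(x)|\le|\nabla f(x)^\top\delta_C|+\tfrac{L}{2}\|\delta_C\|^2.
\]

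\emph{Step 3: bound each term.}
\begin{itemize}
\item For the linear term, Cauchy--Schwarz gives $|\nabla f(x)^\top\delta_C|\le\|\nabla f(x)\|\,\|\delta_C\|\le\alpha\|\nabla f(x)\|\,\|\delta\|$.
\item For the remainder, $\tfrac{L}{2}\|\delta_C\|^2\le\tfrac{L\alpha^2}{2}\|\delta\|^2=\mathcal{O}(\|\delta\|^2)$. The implied constant is uniform, since it depends only on $L$ and $\alpha<1$.
\end{itemize}
Combining these gives the claimed bound.

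Finally, I would note that the projection interpretation $\delta_C=P_{\mathcal{M}}(\delta)$ is not needed for the proof. Only the contraction hypothesis is used. If $\mathcal{M}$ were a linear subspace, that hypothesis would hold with $\alpha\le 1$ automatically, and strict contraction would require $\delta\notin\mathcal{M}$.
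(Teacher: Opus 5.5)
Your proof follows the paper's route: first-order Taylor expansion at $x$, applied with $\delta_C$ in place of $\delta$, then Cauchy--Schwarz and the contraction $\|\delta_C\|\le\alpha\|\delta\|$. You are also right that continuous differentiability alone only gives an $o(\|\delta\|)$ remainder, so the stated $\mathcal{O}(\|\delta\|^2)$ needs a locally Lipschitz gradient (e.g.\ $f\in C^2$); this assumption, your check that $x+\delta_C$ stays in the neighborhood, and your explicit conversion of $\mathcal{O}(\|\delta_C\|^2)$ into $\mathcal{O}(\|\delta\|^2)$ are all steps the paper's proof uses silently.
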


\begin{proof}
By first-order Taylor expansion,
\[
f(x+\delta)
=
f(x) + \nabla f(x)^\top \delta + \mathcal{O}(\|\delta\|^2).
\]
Applying this to $\delta_C$ gives:
\[
f(x+\delta_C)
=
f(x) + \nabla f(x)^\top \delta_C + \mathcal{O}(\|\delta_C\|^2).
\]
Using Cauchy–Schwarz,
\[
|\nabla f(x)^\top \delta_C|
\le
\|\nabla f(x)\| \|\delta_C\|.
\]
Substituting $\|\delta_C\| \le \alpha \|\delta\|$ completes the proof.
\end{proof}

\paragraph{Directional Attenuation.}
Beyond norm reduction, stability further improves when the projection
removes components aligned with high-sensitivity directions.
Let $g = \nabla f(x)$ and decompose:
\[
\delta = \delta_\parallel + \delta_\perp,
\]
where $\delta_\parallel$ lies in $\mathrm{span}(g)$.
If $\mathcal{M}$ excludes components aligned with $g$,
then
\[
|g^\top \delta_C|
<
|g^\top \delta|,
\]
even when $\|\delta_C\| \approx \|\delta\|$.

\paragraph{Interpretation.}
The correction operator therefore improves local stability via two mechanisms:
\begin{enumerate}
    \item \textbf{Norm contraction:} reducing perturbation magnitude.
    \item \textbf{Directional filtering:} attenuating components aligned with
    high-sensitivity semantic directions.
\end{enumerate}

Since ASR errors are primarily phonetic in nature,
constraining corrections to phonetic neighborhoods
acts as a structured projection that removes arbitrary semantic drift.
This yields improved first-order stability without requiring
retraining or modification of contextual language models.
\subsection{Prompt used by our LLM-based Decoding 
strategy}\label{subsec:prompt for llm decoding}

\begin{figure}[t]
    \centering
    \includegraphics[width=\linewidth]{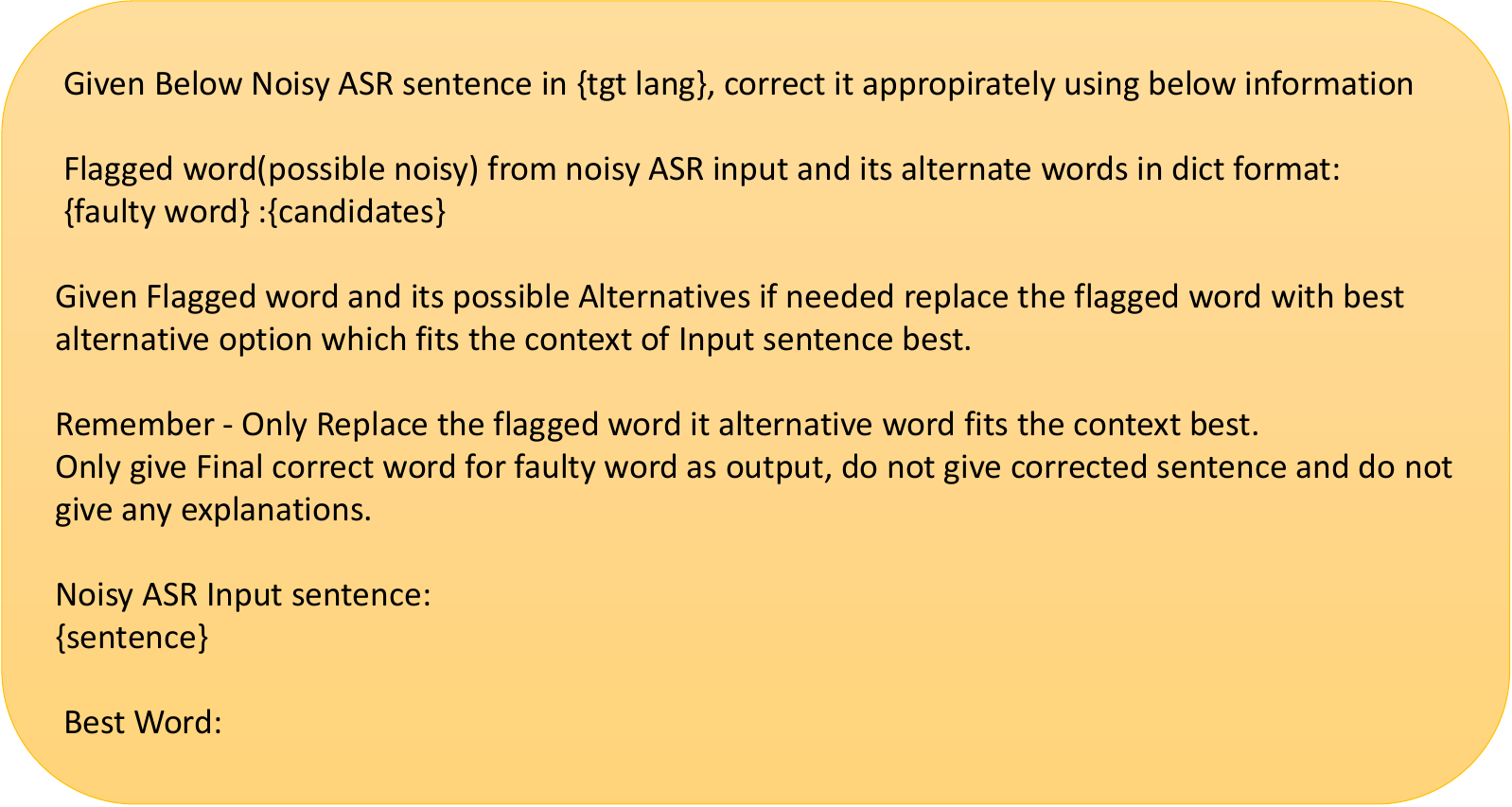}
    \caption{Prompt used for Ranking.}
    \label{fig:prompt for LLM}
\end{figure}

 
 
    
 
 


\paragraph{GNN training architecture details}
\begin{table}[t!]
\centering
\small
\begin{tabular}{ll}
\hline
\textbf{Component} & \textbf{Setting} \\
\hline
GNN Architecture & GraphSAGE (SAGEConv) \\
Input feature dimension & 768 \\
Hidden dimension & 256 \\
Number of GNN layers & 2 \\
Dropout rate & 0.5 \\
Activation function & ReLU \\
Link predictor & 2-layer MLP \\
Link predictor hidden dim & 256 \\
Link predictor output dim & 1 \\
\hline
\end{tabular}
\caption{Graph neural network architecture and training configuration used for phoneme-level link prediction.}
\label{tab:gnn_settings}
\end{table}

\subsection{Pseudo-code for ASR Error Correction.}
\begin{algorithm}
\caption{Graph-Guided ASR Error Correction (G-SPIN)}
\begin{algorithmic}[1]
\STATE \textbf{Input:} noisy sentence $S$, flagged tokens $\{w\}$, GNN dictionary
\FOR{each flagged token $w$ in $S$}
  \STATE $\mathcal{C}_{\text{GNN}} \leftarrow \text{CollectGNN}(w)$
  \STATE $\mathcal{C} \leftarrow \text{BeamDecodeMLM}(S, w, \mathcal{C}_{\text{GNN}})$
  \STATE $\mathcal{C} \leftarrow \text{Unique}(\mathcal{C} \cup \{w\})$
  \STATE $c^\star \leftarrow g_{\mathrm{LLM}}(S, w, \mathcal{C})$
  \IF{$c^\star$ is empty}
    \STATE $c^\star \leftarrow \arg\max_{c \in \mathcal{C}} \mathrm{Score}_{\mathrm{MLM}}(c)$
  \ENDIF
  \STATE Replace $w$ in $S$ with $c^\star$
\ENDFOR
\STATE \textbf{Output:} corrected sentence $S$
\end{algorithmic}
\end{algorithm}

\end{document}